\newcommand{\norm}[1]{\left\lVert#1\right\rVert}
\newcommand{\wastar}{wA*\xspace}
\newcommand{\pase}{PA*SE\xspace}
\newcommand{\epase}{ePA*SE\xspace}
\newcommand{\eas}{eA*\xspace}
\newcommand{\wepase}{\mbox{w-}\epase}
\newcommand{\insat}{INSAT\xspace}
\newcommand{\pinsat}{PINSAT\xspace}
\newcommand{\rrtconnect}{pBiRRT\xspace}
\newcommand{\trajopt}{TrajOpt\xspace}
\newcommand{\weas}{w-\eas\xspace}
\newcommand{\state}{\ensuremath{\mathbf{x}}\xspace}
\newcommand{\nextstate}{\ensuremath{\mathbf{x}'}\xspace}
\newcommand{\nextnextstate}{\ensuremath{\mathbf{x}''}\xspace}
\newcommand{\startstate}{\ensuremath{\state^S}\xspace}
\newcommand{\ac}{\ensuremath{\mathbf{a}}\xspace}
\newcommand{\dac}{\ensuremath{\mathbf{a^d}}\xspace}
\newcommand{\pac}{\ensuremath{\mathbf{a^d}}\xspace}
\newcommand{\Aset}{\ensuremath{\mathcal{A}}\xspace}
\newcommand{\goalreg}{\ensuremath{\mathcal{G}}\xspace}
\newcommand{\ed}{\ensuremath{e}\xspace}
\newcommand{\edge}{\ensuremath{(\state,\mathbf{a})}\xspace}
\newcommand{\pedgenext}{\ensuremath{(\state',\mathbf{a^d})}\xspace}
\newcommand{\open}{\ensuremath{\textit{OPEN}}\xspace}
\newcommand{\closed}{\ensuremath{\textit{CLOSED}}\xspace}
\newcommand{\be}{\ensuremath{\textit{BE}}\xspace}
\newcommand{\gval}{\ensuremath{g}}
\newcommand{\cost}{\ensuremath{c}\xspace}
\newcommand{\fval}{\ensuremath{f}}
\newcommand{\hval}{\ensuremath{h}}
\newcommand{\numexpanded}{\ensuremath{n\_successors\_generated}\xspace}
\newcommand{\graph}{\ensuremath{G}\xspace}
\newcommand{\vertex}{\ensuremath{v}\xspace}
\newcommand{\Vertices}{\ensuremath{\mathcal{V}}\xspace}
\newcommand{\Edges}{\ensuremath{\mathcal{E}}\xspace}
\newcommand{\numthreads}{\ensuremath{N_t}\xspace}
\newcommand{\lowdspace}{\mathcal{X}_L}
\newcommand{\auxspace}{\mathcal{X}_A}
\newcommand{\statespace}{\mathcal{X}}
\newcommand{\obsspace}{\mathcal{X}^{obs}}
\newcommand{\freespace}{\mathcal{X}^{\text{free}}}
\newcommand{\timemap}{\alpha}
\newcommand{\gpath}{\boldsymbol{\theta}}
\newcommand{\traj}{\boldsymbol{\phi}}
\newcommand{\trajdot}{\boldsymbol{\phi}^{(1)}}
\newcommand{\trajddot}{\boldsymbol{\phi}^{(2)}}
\newcommand{\trajdddot}{\boldsymbol{\phi}^{(3)}}
\newcommand{\utraj}{\boldsymbol{\psi}}
\newcommand{\kmin}{k_{\text{min}}}
\newcommand{\optimize}{\textsc{Optimize}\xspace}
\newcommand{\warmoptimize}{\textsc{WarmOptimize}\xspace}
\theoremstyle{definition}
\newtheorem{definition}{Definition}
\theoremstyle{definition}
\newtheorem{remark}{Remark}
\newtheorem{assump}{Assumption}
\newtheorem{lemma}{Lemma}
\newtheorem{theorem}{Theorem}
\newcommand\algorithmicprocedure{\textbf{procedure}}
\newcommand{\algorithmicendprocedure}{\algorithmicend\ \algorithmicprocedure}
\newcommand\PROCEDURE[3][default]{%
  \ALC@it
  \algorithmicprocedure\ \textsc{#2}(#3)%
  \ALC@com{#1}%
  \begin{ALC@prc}%
}
\newcommand\ENDPROCEDURE{%
  \end{ALC@prc}%
  \ifthenelse{\boolean{ALC@noend}}{}{%
    \ALC@it\algorithmicendprocedure
  }%
}
\newenvironment{ALC@prc}{\begin{ALC@g}}{\end{ALC@g}}
\patchcmd{\algorithmic}{\addtolength{\ALC@tlm}{\leftmargin} }{\addtolength{\ALC@tlm}{\leftmargin}}{}{}
\title{\LARGE \bf
PINSAT: Parallelized Interleaving of Graph Search and \\ Trajectory Optimization for Kinodynamic Motion Planning}
\author{Ramkumar Natarajan$^{1}$, Shohin Mukherjee$^{1}$, Howie Choset$^{1}$ and Maxim Likhachev$^{1}$
\thanks{$^{1}$ The authors are with The Robotics Institute at Carnegie Mellon University, Pittsburgh PA 15213. email: \{\tt\small rnataraj, shohinm, maxim, choset\}@cs.cmu.edu }}
\begin{document}

\maketitle
\thispagestyle{empty}
\pagestyle{empty}

\begin{abstract}
Trajectory optimization is a widely used technique in robot motion planning for letting the dynamics of the system shape and synthesize complex behaviors. Several previous works have shown its benefits in high-dimensional continuous state spaces and under differential constraints. However, long time horizons and planning around obstacles in non-convex spaces pose challenges in guaranteeing convergence or finding optimal solutions. As a result, discrete graph search planners and sampling-based planers are preferred when facing obstacle-cluttered environments. A recently developed algorithm called INSAT effectively combines graph search in the low-dimensional subspace and trajectory optimization in the full-dimensional space for global kinodynamic planning over long horizons. Although INSAT successfully reasoned about and solved complex planning problems, the numerous expensive calls to an optimizer resulted in large planning times, thereby limiting its practical use. Inspired by the recent work on edge-based parallel graph search, we present PINSAT, which introduces systematic parallelization in INSAT to achieve lower planning times and higher success rates, while maintaining significantly lower costs over relevant baselines. We demonstrate PINSAT by evaluating it on 6 DoF kinodynamic manipulation planning with obstacles.

\end{abstract}

\FloatBarrier
\section{Introduction}
\pdfoutput=1

Graph search-based planning algorithms like A* and its variants~\cite{hart1968formal, pohl1970heuristic, aine2016multi} enable robots to come up with well-reasoned long-horizon plans to achieve a given task objective~\cite{kusnur2021planning, mukherjee2021reactive}. They do so by searching over the graph that results from discretizing the state and action space. However, in robotics, several dynamically rich tasks require high-dimensional planning in the continuous space. For such domains, kinodynamic planning and trajectory optimization techniques have been developed to synthesize dynamically feasible trajectories. The existing kinodynamic algorithms achieve this by discretizing the action space to roll out trajectory primitives in the discrete or continuous state space. On the other hand, trajectory optimization techniques do not discretize the state or action space but suffer local minima, lack convergence guarantees in nonlinear settings, and struggle to reason over long horizons.

\begin{figure}
\centering
\includegraphics[width=\columnwidth]{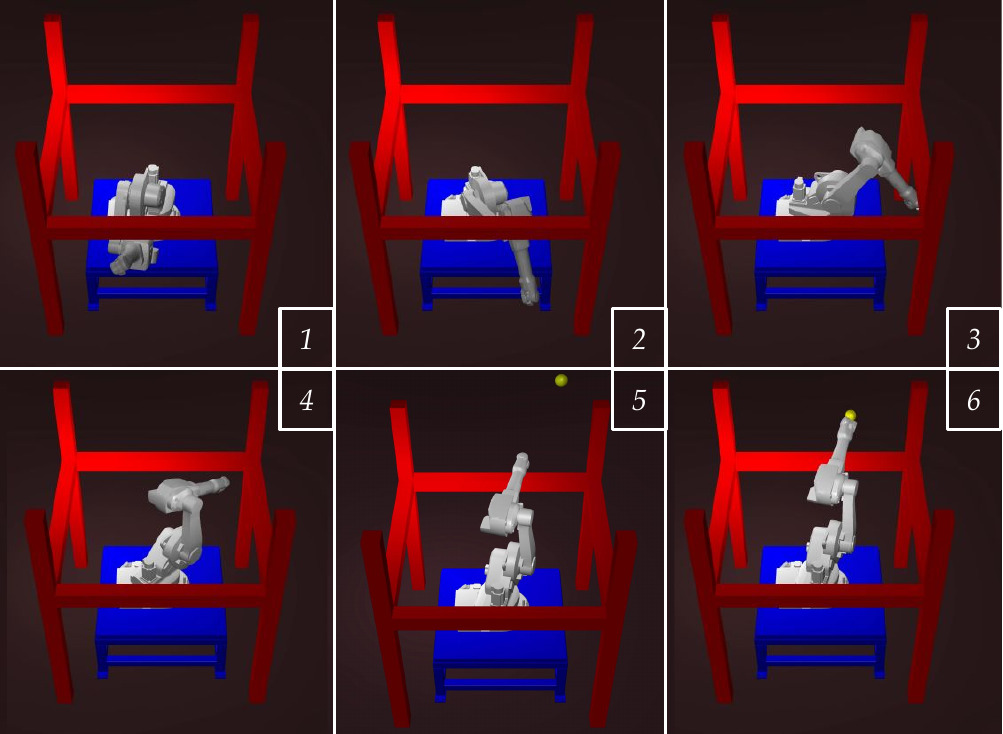}
\caption{ABB arm evading obstacles to block a ball thrown at it using kinodynamic motion produced by PINSAT.}
\label{fig:pinsat_abb}
\vspace{-0.5cm}
\end{figure}

An algorithm called INSAT \cite{insat,insatptc}, short for INterleaved Search And Trajectory optimization, bridges this gap for global kinodynamic planning. The idea behind INSAT was (a) to identify a low-dimensional manifold, (b) perform a search over a discrete graph embedded in this manifold, and (c) while searching the graph, utilize full-dimensional trajectory optimization to compute the cost of partial solutions found by the search. Thus every edge/action evaluation in the graph required solving at least one and potentially several instances of trajectory optimization. Though INSAT used dynamic programming to cleverly warm-start every instance of optimization with a good approximate solution, the frequent calls to the optimizer limited its practical usage.


For domains where action evaluation is expensive, a parallelized planning algorithm \epase (Edge-based Parallel A* for Slow Evaluations) was developed~\cite{mukherjee2022epase} that changes the basic unit of the search from state expansions to edge expansions. This decouples the evaluation of edges from the expansion of their common parent state, giving the search the flexibility to figure out which edges need to be evaluated to solve the planning problem. In this work, we employ the parallelization technique of \epase to parallelize the expensive trajectory optimization step in \insat. The resulting algorithm termed \pinsat: Parallel Interleaving of Graph Search and Trajectory Optimization, can compute dynamically feasible plans while achieving close to real-time performance. The key motivation for developing PINSAT is that if INSAT can effectively solve long-horizon, dynamically rich planning tasks, then using the ideas of \epase parallelization to expedite slow edge optimizations in INSAT will result in a strictly better algorithm. We empirically demonstrate that this is indeed the case in PINSAT.

\FloatBarrier
\section{Related Work}
\subsubsection{Kinodynamic Motion Planning} 
Kinodynamic planning has been a subject of extensive research, driven by the need for robots and autonomous systems to navigate complex environments while considering their dynamic constraints. This dynamic feasibility is typically achieved using four different schemes namely (i) search-based, (ii) sampling-based, (iii) optimization-based, and (iv) hybrid methods.

\textbf{Search-based} kinodynamic planning involves precomputing a set of short, dynamically feasible trajectories called motion primitives that capture the robot's capabilities. Search algorithms like A* or its variants \cite{maxprimitives} are used to search over a graph wherein for any state its successor states are being computed by applying motion primitives and provide guarantees of optimality and completeness w.r.t. the chosen primitives. However, the choice and calculation of these motion primitives that are efficient can be challenging, particularly in high-dimensional systems. \textbf{Sampling-based} kinodynamic methods adapt and extend classic approaches such as Probabilistic Roadmaps (PRM) \cite{prm} and Rapidly Exploring Random Trees (RRT) \cite{rrt} to handle dynamic systems \cite{kinodynamic}. This is achieved using dynamically feasible rollouts with random control inputs or solutions of boundary value problems within the \textit{extend} operation. There are probabilistically complete and asymptotically optimal variants \cite{completerrt, asymprrt}, however, empirical convergence might be tricky. \textbf{Optimization-based} planning methods can generate high-quality trajectories that are dynamically feasible and do not suffer from the curse of dimensionality in search-based methods. They formulate the motion planning problem as an optimization problem \cite{optctr} with cost functions defined for trajectory length, time, or energy consumption. After transcribing into a finite-dimensional optimization, these methods rely on the gradients of the cost function and dynamics and employ numerical optimization algorithms to find locally optimal solutions \cite{trajopt, komo, ddp}. However, except for a small subset of systems (such as linear or flat systems), for most nonlinear systems these methods lack guarantees on completeness, optimality or convergence. \textbf{Hybrid} planning methods combine two or all of the schemes mentioned above. Search and sampling methods are combined in \cite{bit, ss1, ss2}, sampling and optimization methods are combined in \cite{rabit, so1, so2}, search, sampling, and optimization methods are combined in \cite{idba}. INSAT combined search and optimization methods and demonstrated its capability in several complex dynamical systems \cite{insat, insatptc}. 

\subsubsection{Parallel Search}
Several approaches parallelize sampling-based planning algorithms in which parallel processes cooperatively build a PRM~\cite{jacobs2012scalable} or an RRT~\cite{devaurs2011parallelizing, ichnowski2012parallel, jacobs2013scalable}  by sampling states in parallel. However, in many planning domains, sampling of states is not trivial. One such class of planning domains is simulator-in-the-loop planning, which uses a physics simulator to generate successors~\cite{liang2021search}. Unless the state space is simple, such that the sampling distribution can be scripted, there is no principled way to sample meaningful states that can be realized in simulation.

Search-based methods like A* can be parallelized by generating successors concurrently during state expansion but are limited by the domain's branching factor. Alternatively, approaches like~\cite{irani1986parallel, evett1995massively, zhou2015massively} parallelize state expansions, allowing re-expansions to handle premature expansions before minimal cost, but may face exponential re-expansions, especially with weighted heuristics. On the contrary, \pase~\cite{phillips2014pa} parallelly expands states at most once without affecting solution quality bounds. However, \pase is inefficient in domains with costly edge evaluations as each thread sequentially evaluates the outgoing edges. To address this, \epase~\cite{mukherjee2022epase} improves \pase by parallelizing edge search. MPLP~\cite{mukherjee2022mplp} achieves faster planning by lazily running the search and asynchronously evaluating edges in parallel, assuming that successors can be generated without evaluating the edge. Some work focuses on parallelizing A* on GPUs~\cite{zhou2015massively, he2021efficient}, but their SIMD execution model limits applicability to domains with simple actions sharing the same code.


To the best of our knowledge, there is very little previous work on parallel kinodynamic planning. All existing approaches are sampling-based and employ a straightforward parallelization to execute the \textit{steer} operation in batch—either on CPU for steering with boundary value solvers \cite{pmpc} or on the GPU for steering with neural networks in batch \cite{pnn}. Consequently, they inherit the same limitations as sampling-based planners, where the sampling of states can be nontrivial. Search-based methods overcome this limitation by systematically exploring the space, and INSAT \& PINSAT owe their success to this characteristic.


\FloatBarrier
\section{Problem Formulation}
\pdfoutput=1



Let the $n$-dimensional state space of the robot be denoted by $\statespace \subseteq \mathbb{R}^n$. Let $\obsspace \subset \statespace$ be the obstacle space and $\freespace = \statespace \setminus \obsspace$ be the free motion planning space. For kinodynamic motion planning, we should reason about and satisfy constraints on the derivatives of position such as velocity, acceleration, etc. This boils down to including those derivatives as a part of the planning state and can quickly lead to a large intractable planning space. To cope with this let us consider a low-dimensional (low-D) space $\lowdspace$ and an auxiliary space $\auxspace$ whose Cartesian product forms the full-dimensional (full-D) space $\statespace = \auxspace \times \lowdspace$. Let the low-D space be such that $\obsspace\subset\lowdspace$. Consider a finite graph $\graph = (\Vertices, \Edges)$ defined as a set of vertices \Vertices and directed edges \Edges embedded in this low-D space $\lowdspace$. Each vertex $\vertex \in \Vertices$ represents a state $\state \in \lowdspace$. An edge $\ed \in \Edges$ connecting two vertices $\vertex_1$ and $\vertex_2$ in the graph represents an action $\ac \in \Aset$ that takes the agent from corresponding states $\state_1$ to $\state_2$. In this work, we assume that all actions are deterministic. Hence an edge \ed can be represented as a pair \edge, where \state is the state at which action \ac is executed. For an edge \ed, we will refer to the corresponding state and action as $\ed.\state$ and $\ed.\ac$ respectively. So given a start state $\state^S$ and a goal region $\goalreg$, the kinodynamic motion planning problem can be cast as the following optimization problem
\begin{subequations}
\begin{alignat}{2}
\min_{\traj(t), T} \quad & w_1T + w_2\int_0^T \traj(t)dt \label{eq:obj1}\\ 
\textrm{s.t.} \quad & 0 < t_{\text{min}} \leq T \leq t_{\text{max}} \label{eq:obj2}\\
& \traj(0) = \state^S,  \traj(T) = \state^G \in \goalreg \label{eq:obj3}\\
& |\traj^{(j)}(t)| \preceq \traj^{(j)}_{\text{lim}} > 0; \forall t; j=\{1,2,\ldots,\gamma\} \label{eq:obj4}\\
& \traj(t) \in \freespace; \forall t \label{eq:obj5}\\
& \traj(t) \in \mathcal{C}^\gamma(\mathbb{R}); \forall t \label{eq:obj6}
\end{alignat}
\label{eq:obj}
\end{subequations}
where $\traj : [0, T] \rightarrow \lowdspace$ is a spatial trajectory that is $\gamma$-times differentiable and therefore continuous, $w_1$ and $w_2$ trade-off the relative importance between path length and duration, and $\traj^{(j)}(t) =\frac{d^j\traj(t)}{dt^j}$ is the $j$-th derivative of $\traj(t)$. The Eq. \ref{eq:obj2} is the constraint on the maximum duration of the trajectory, Eq. \ref{eq:obj3} is the boundary start and goal conditions, Eq. \ref{eq:obj4} is the element-wise system limit on the derivatives of the positional trajectory, Eq. \ref{eq:obj5} requires the trajectory to lie in $\freespace$ and Eq. \ref{eq:obj6} imposes the degree of continuity on the trajectory. The minimizer of the Eq. \ref{eq:obj} is a trajectory $\traj_{\textbf{x}^S\textbf{x}^G}(t)$\footnote{The argument $t$ of the $\traj(t)$ may be dropped for brevity.} that connects start state $\state^S$ to the goal region $\goalreg$. There is a computational budget of \numthreads threads available, which can run in parallel. 



\FloatBarrier
\section{Background}
\label{sec:bg}
\pdfoutput=1



In kinodynamic planning, a great deal of focus is on the integration accuracy of the equations of motion to obtain feasible solutions to the dynamic constraints. However, fully controllable systems, like the vast majority of commercial manipulators, can accurately track trajectories generated with smooth splines \cite{planningsp} even if they are mildly inconsistent with the system's nonlinear dynamics. Even for many other dynamical systems, parameterizing trajectories with polynomials achieves near-dynamic feasibility and proves to be highly effective. Popular techniques like direct collocation and pseudo-spectral methods parameterize trajectories with polynomials at some level. In this work, we use a piecewise polynomial widely used recently for motion planning \cite{manipspline, planningbsp} called B-spline to represent trajectories. We will first provide some background on B-splines and highlight a few nice properties that enable us to satisfy dynamics and guarantee completeness.

\subsection{B-Splines}
B-splines are smooth and continuous piecewise polynomial functions made of finitely many basis polynomials called B-spline bases. A $k$-th degree B-spline basis with $m$ control points can be calculated using the Cox-de Boor recursion formula \cite{patrikalakis} as
\begin{equation}
N_{i, k}(t)=\frac{t-t_i}{t_{i+k-1}-t_i} N_{i, k-1}(t)+\frac{t_{i+k}-t}{t_{i+k}-t_{i+1}} N_{i+1, k-1}(t)
\label{eq:bsb}
\end{equation}
where $i=0, \ldots, m$, $\frac{t-t_i}{t_{i+k-1}-t_i}$ and $\frac{t_{i+k}-t}{t_{i+k}-t_{i+1}}$ are the interpolating coefficients between $t_i$ and $t_{i+k}$. For $k=0$, $N_{i,0} = 1$ if $t_i \leq t<t_{i+1}$ and 0 otherwise. 


Let us define a non-decreasing knot vector $\textbf{T}$ and the set of control points $\textbf{P}$ called de Boor points
\begin{subequations}
\begin{align}
\textbf{T} = \{\underbrace{t_0, \ldots, t_0}_{k-\text{points}}, t_{k+1}, \ldots, t_{m}, \underbrace{t_f, \ldots, t_f}_{k-\text{points}}\}\label{eq:knotvec} \\
\textbf{P} = \{\textbf{p}_0, \textbf{p}_1, \ldots, \textbf{p}_m\}\label{eq:ctrlvec}
\end{align}
\end{subequations}

where $\textbf{p}_i \in \mathbb{R}^n, i=0,\ldots, m$. Then our B-spline trajectory $\traj(t)$ can be uniquely determined by $\textbf{T}$, $\textbf{P}$ and  the degree of the polynomial $k$
\begin{equation}
    \traj(t) = \sum_{i=0}^m \textbf{p}_i N_{i, k}(t) 
    \label{eq:bs}
\end{equation}
\begin{remark}
    The above B-spline trajectory $\traj(t)$ is guaranteed to entirely lie inside the convex hull of the active control points $\textbf{P}$ (the control points whose bases are not zero).
\end{remark}

We need to evaluate the derivatives of $\traj(t)$ to satisfy the limits on joint velocity and acceleration. For this, the $j$-th derivative of $N_{i,k}(t)$ is given as 
\begin{equation}
N_{i, k}^{(j)}(t)=j\left(\frac{N_{i, k-1}^{(j-1)}(t)}{t_{i+k}-t_i}-\frac{N_{i+1, k-1}^{(j-1)}(t)}{t_{i+k+1}-t_{i+1}}\right)    
\end{equation}
\begin{remark}
    The derivative of a B-spline is also a B-spline of one less degree and it is given by 
\begin{equation}
    \traj^{(j)}(t) = \sum_{i=0}^n \textbf{p}_i N_{i, k}^{(j)}(t)
    \label{eqn:bsderiv1}
\end{equation}
\end{remark}


\FloatBarrier
\section{Approach}
\label{sec:methods}


PINSAT interleaves \textit{parallelized} discrete graph search in low-D with trajectory optimization in full-D to combine the benefits of the former's ability to search non-convex spaces and solve combinatorial parts of the problem and the latter's ability to obtain a locally optimal solution not constrained to discretization. In the following subsection, we will justify the choice of the low-D search algorithm. Subsequently, we will explain how an edge in the graph in $\lowdspace$ is lifted to $\statespace$ using B-spline optimization and the necessary modifications to the low-D search to make it compatible with PINSAT.

\subsection{Low Dimensional Graph Search}
The low-D search runs \weas~\cite{mukherjee2022epase} with variations that allow it to be interleaved with full-D trajectory optimization. Using \weas instead of \wastar (like in INSAT) for low-D search provides a systematic framework to parallelize the optimization of expensive trajectory with its parallelized variant \wepase. This is because, unlike in \wastar, where the basic operation of the search loop is the expansion of a state, in \weas, the basic operation is \emph{expansion of an edge}. 

\subsubsection{\weas}
\label{sec:weas}
In \weas, the open list (\open) is a priority queue of edges (not states like in \wastar) that the search has generated but not expanded with the edge with the smallest priority in front of the queue. The priority of an edge is $\fval\left(\edge\right) = \gval(\state) + \hval(\state)$. Expansion of an edge \edge involves evaluating the edge to generate the successor $\state'$ and adding/updating (but not evaluating) the edges originating from $\state'$ into \open with the same priority of $\gval(\state') + \hval(\state')$. Henceforth, whenever $\gval(\state')$ changes, the positions of all of the outgoing edges from $\state'$ need to be updated in \open. To avoid this, \epase replaces all the outgoing edges from $\state'$ by a single \textit{dummy} edge $(\state', \dac)$, where \pac stands for a dummy action until the dummy edge is expanded. Every time $\gval(\state')$ changes, only the dummy edge's position has to be updated. When the dummy edge $(\state', \dac)$ is expanded, it is replaced by the outgoing real edges from $\state'$ in \open. The real edges are expanded when they are popped from \open by an edge expansion thread. This decoupling of the expansion of the outgoing edges from the expansion of their common parent is what enables its asynchronous parallelization.

\subsection{B-Spline Optimization}
This subsection explains the abstract optimization over the length and duration of B-splines that satisfy the obstacle, duration, derivative, and boundary constraints. In the later section, we describe how it is utilized within \pinsat's parallel low-D graph search. 

\subsubsection{Decoupling Optimization over Trajectory and its Duration}
It is easy to observe from Eq. \ref{eq:obj} that there is a coupling between the trajectory $\traj(t)$ and its duration $T$. For simultaneous optimization over the $\traj$ and $T$, we decouple them by introducing a trajectory coordinate $u \in [0, 1]$ and a mapping to convert this coordinate to the actual time, $t=\timemap(u)$ \cite{planningsp}. Let the mapping $\timemap$ be monotonically increasing with $\timemap(0) = 0$ and $\timemap(1)=T$ and $\traj(t) = \traj(\alpha(u)) = (\traj\circ\alpha)(u) = \utraj(u)$. The derivatives of the trajectory are
\begin{equation}
    \trajdot(t) = \frac{d\traj(\alpha(u))}{d\alpha}\frac{d\alpha(u)}{du}\frac{du}{dt} = \frac{d\utraj(u)}{du}\frac{du}{dt}
    \label{eq:trajderiv}
\end{equation}
Dropping the arguments for simplicity

\begin{equation}
    \trajddot(t) = \frac{d\utraj^2}{du^2}\left(\frac{du}{dt}\right)^2 + \frac{d\utraj}{ds}\frac{d^2u}{dt^2}
    \label{eq:trajdderiv}    
\end{equation}
Similarly
\begin{equation}
    \trajdddot(t) = \frac{d^3\utraj}{du^3}\left(\frac{du}{dt}\right)^3 + 3\frac{d^2\utraj}{du^2}\frac{d^2u}{dt^2}\frac{du}{dt} + \frac{d\utraj}{du}\frac{d^3u}{dt^3}    
    \label{eq:trajddderiv}
\end{equation}

\pdfoutput=1

\begin{algorithm}[H]
\caption{\pinsat: Planning Loop}
\label{alg:pinsat}
\begin{footnotesize}
\begin{algorithmic}[1]
\STATE $\Aset\gets\text{ action space }$, $\numthreads \gets$ number of threads, $\graph \gets \emptyset$
\STATE $\startstate\gets\text{ start state }$, $\goalreg\gets\text{ goal region}$, $terminate \gets \text{False}$
\PROCEDURE{Plan}{}{}
    \STATE $\forall\state\in\graph$,~$\state.\gval\gets\infty$,~$\numexpanded(s)=0$ \label{alg:pinsat/init1}
    \STATE $\startstate.\gval\gets0$
    \STATE insert $(\startstate, \dac)$ in \open \COMMENT{Dummy edge from \startstate} \label{alg:pinsat/init2}
    \STATE LOCK
    \WHILE{$\textbf{not } terminate$}
        \IF{$\open=\emptyset\textbf{ and }\be=\emptyset$} \label{alg:pinsat/be}
            \STATE $terminate = \text{True}$
            \STATE UNLOCK
            \RETURN $\emptyset$
        \ENDIF        
        \STATE $\edge \gets \open.min()$
        \label{alg:pinsat/open_pop}
        \IF{such an edge does not exist}
            \STATE UNLOCK
            \STATE wait until \open or \be change
            \label{alg:pinsat/wait}
            \STATE LOCK
            \STATE continue
        \ENDIF
        \IF{$\state \in \goalreg$}
            \STATE $terminate = \text{True}$
            \STATE UNLOCK
            \RETURN $\state.traj$
            \label{alg:pinsat/return_traj}
        \ELSE
            \STATE UNLOCK
            \WHILE{\edge has not been assigned a thread}
                \FOR{$i=1:\numthreads$}
                    \IF{thread $i$ is available}
                        \IF{thread $i$ has not been spawned}
                            \STATE Spawn $\textsc{EdgeExpandThread}(i)$
                            \label{alg:pinsat/spawn}
                        \ENDIF
                        \STATE Assign \edge to thread $i$
                        \label{alg:pinsat/assign_edge}
                    \ENDIF
                \ENDFOR
            \ENDWHILE
            \STATE LOCK
        \ENDIF
    \ENDWHILE 
    \STATE $terminate = \text{True}$
    \STATE UNLOCK
\ENDPROCEDURE
\end{algorithmic}
\end{footnotesize}
\end{algorithm}

\subsubsection{Transcription for Optimizing B-splines} The derivative of the B-spline curve $\traj^{(j)}(t)$ in terms of the derivative of the B-spline basis $N_{i, k}^{(j)}(t)$ is given in Eq. \ref{eqn:bsderiv1}. Using this relation the derivative of the B-spline curve with the trajectory coordinate $\utraj^{(j)}(u)$ in terms of the derivative's control points $\textbf{p}_i^{(j)}$ can be derived as \cite{nurbs}
\begin{equation}
    \utraj^{(j)}(u) = \sum_{i=0}^{n-j} \textbf{p}_i^{(j)} N_{i, k-j}(u)
    \label{eq:bsdervi2}
\end{equation}
where $\textbf{p}_i^{(j)}$ is given as
\begin{equation}
\textbf{p}_i^{(j)}= \begin{cases}\textbf{p}_i & j=0 \\ \frac{k-j+1}{u_{i+k+1}-u_{i+j}}(\textbf{p}_{i+1}^{(j-1)}-\textbf{p}_i^{(j-1)}) & j>0\end{cases}
\end{equation}

In this work, we use $t = \alpha(u) = Tu$. So $\frac{du}{dt} = \frac{1}{T}$ and $\frac{d^2u}{dt^2} = \frac{d^3u}{dt^3} = 0$ and Eq. \ref{eq:trajderiv} - Eq. \ref{eq:trajddderiv} becomes 
\begin{align}
    \traj^{(j)}(t) = \frac{1}{T^j}\frac{d^j\utraj}{du^j} \quad \text{where} \ j=\{1,2,3\}
    \label{eq:ttou}
\end{align}

\pdfoutput=1

\begin{algorithm}[H]
\caption{\pinsat: Edge Expansion}
\label{alg:pinsat_expand} 
\begin{footnotesize}
\begin{algorithmic}[1]
\PROCEDURE{EdgeExpandThread}{$i$}
    \WHILE{$\textbf{not } terminate$}
        \IF{thread $i$ has been assigned an edge \edge}
            \STATE $\textsc{Expand}\left(\edge\right)$
        \ENDIF
    \ENDWHILE
\ENDPROCEDURE
\PROCEDURE{Expand}{$\edge$}
    \STATE LOCK
    \IF{$\ac = \pac$} \label{alg:pinsat_expand:dummy1}
        \STATE insert \state in \be
        \label{alg:pinsat_expand/add_be}
        \FOR{$\ac \in \Aset$}
            \STATE $\fval\left(\edge\right) = \gval(\state) + \hval(\state)$
            \STATE insert \edge in \open with $\fval\left(\edge\right)$
        \ENDFOR \label{alg:pinsat_expand:dummy2}
    \ELSE \label{alg:pinsat_expand/real1}
        \STATE UNLOCK
        \STATE $\state', \cost\left(\edge\right)  \gets \textsc{GenerateSuccessor}
        \left(\edge\right)$
        \label{alg:pinsat_expand/evaluate}
        \STATE LOCK
        \IF{$\state' \notin \closed\cup\be$} \label{alg:pinsat_expand/clorbe}
            \STATE $\traj_{\startstate\nextstate} = \textsc{GenerateTrajectory}(\state, \nextstate)$
            \IF{$\gval(\state')>c(\traj(\startstate\nextstate))$}
                \STATE $\gval(\state') = c(\traj(\startstate\nextstate))$
                \STATE $\state'.parent = \state$ 
                \STATE $\state'.traj = \traj_{\startstate\nextstate}$
                \label{alg:pinsat_expand/set_traj}
                \STATE $\fval\left(\pedgenext\right) = \gval(\state') + \hval(\state')$
                \STATE insert/update \pedgenext in \open with $\fval\left(\pedgenext\right)$
            \ENDIF
        \ENDIF
        \STATE $\numexpanded(\state)+=1$
        \IF{$\numexpanded(\state) = |\Aset|$}
            \STATE remove \state from \be
            \label{alg:pinsat_expand/remove_be}
            \STATE insert \state in \closed
            \label{alg:pinsat_expand/add_closed}
        \ENDIF 
    \ENDIF\label{alg:pinsat_expand/real2}
    \STATE UNLOCK
\ENDPROCEDURE
\end{algorithmic}
\end{footnotesize}
\end{algorithm}

Now we can apply the above decoupling to the optimization in Eq. \ref{eq:obj} with the new trajectory coordinate $u$ and the map $\utraj$. Given two boundary states $\state^\prime, \state^{\prime\prime}$, Eq. \ref{eq:obj} can be transcribed into a parameter optimization problem over the control points of the B-spline trajectory $\textbf{P} = \{\textbf{p}_0, \ldots, \textbf{p}_m\}$ and the duration of the trajectory $T$ using Eq. \ref{eq:ttou} as
\begin{subequations}
\begin{alignat}{2}
& \min_{\textbf{p}_0, \ldots, \textbf{p}_m, T} \quad w_1T + w_2\sum_{i=0}^{m-1} \norm{\textbf{p}_{i+1} - \textbf{p}_i}_2 \label{eq:uobj1}\\ 
\textrm{s.t.} \ \ & 0 < t_{\text{min}} \leq T \leq t_{\text{max}} \label{eq:uobj2}\\
& \textbf{p}_0 = \state^\prime,  \textbf{p}_m = \state^{\prime\prime} \label{eq:uobj3}\\
& |\textbf{p}_i^{(j)}| \preceq T^j\traj^{(j)}_{\text{lim}}; i=\{0,\ldots,m\}, j=\{1,2,3\} \label{eq:uobj4}\\
& \utraj_{\textbf{P},T}(u) \in \freespace; u\in[0,1] \label{eq:uobj5}
\end{alignat}
\label{eq:uobj}
\end{subequations}
where the constraint Eq. \ref{eq:uobj4} comes from Eq. \ref{eq:ttou} and $\utraj_{\textbf{P},T}$ is the trajectory reconstructed during every iteration using $\textbf{P}$ and $T$ as explained below. From the properties of B-splines \cite{boor}, we know that the B-spline curve is entirely contained inside the convex hull of its control points. We leverage this property to formulate Eq. \ref{eq:uobj1} and Eq. \ref{eq:uobj4}. So it is enough to just enforce the trajectory derivative constraints at the control points (Eq. \ref{eq:uobj4}) instead of checking if the curve satisfies them for all $u$ (and therefore for all $t$). 
\begin{remark}
The constraint on the control points in Eq. \ref{eq:uobj4} is a sufficient but not necessary to enforce the limits on derivatives. In other words, it is possible to have control points outside the limits $\pm T^j\traj^{(j)}_{\text{lim}}$ and still have the curve satisfy these limits.
\label{rem:suff}
\end{remark}
Note that Eq. \ref{eq:uobj} is a nonlinear program (NLP) for $j>1$ as it has quadratic (or cubic, quintic, etc) constraints in the decision variable $T$ (Eq. \ref{eq:uobj4}). However, the following relaxation obtained by fixing the duration to $t_\text{max}$ is convex if $\freespace$ is convex
\begin{subequations}
\begin{alignat}{2}
&\min_{\textbf{p}_0, \ldots, \textbf{p}_m} \quad w\sum_{i=0}^{m-1} \norm{\textbf{p}_{i+1} - \textbf{p}_i}_2^2 \label{eq:tobj1}\\ 
\textrm{s.t.} \ &  \textbf{p}_0 = \state^\prime,  \textbf{p}_m = \state^{\prime\prime} \label{eq:tobj3}\\
& |\textbf{p}_i^{(j)}| \preceq t_{\text{max}}^j\traj^{(j)}_{\text{lim}}; i=\{0,\ldots,m\}, j=\{1,2,3\} \label{eq:tobj4}\\
& \traj_{\textbf{P},t_{\text{max}}}(t) \in \freespace; t\in[0,t_{\text{max}}] \label{eq:tobj5}
\end{alignat}
\label{eq:tobj}
\end{subequations}

\subsubsection{B-spline Trajectory Reconstruction} 
\label{sec:recon}
Once the control point vector $\textbf{P}$ and the duration of trajectory $T$ are found by solving Eq. \ref{eq:uobj}, the trajectory and its derivatives can be computed given the choice of B-spline basis $N_{i, k}(t)$ (Eq. \ref{eq:bsb}) and the knot vector $\textbf{T}$ (Eq. \ref{eq:knotvec}) using Eq. \ref{eq:bs} and Eq. \ref{eq:bsdervi2}. We also use this reconstruction to validate Eq. \ref{eq:uobj5} within every iteration of the solver routine for obstacle avoidance, early termination and constraint satisfaction.

\begin{figure*}[ht]
\centering
\includegraphics[width=\textwidth]{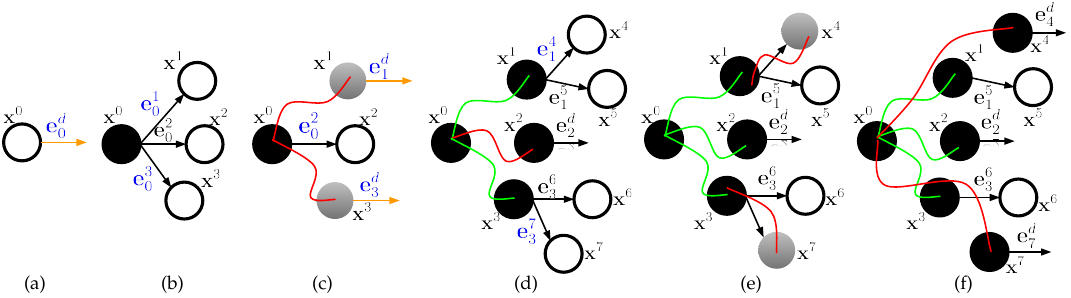}
\caption{\small Graphical illustration of PINSAT. (Fig. a, b) The dummy edge $e^d_0$ from $\state^0$ is expanded to get real low-D edges $[e^1_0, e^2_0, e^3_0]$ but not yet evaluated with the optimizer in full-D to generate the state successor (denoted with hollow node). (c) The \epase architecture in PINSAT then evaluates multiple edges in parallel (red curves). Once evaluated the outgoing nodes $\state^1\&\state^3$ are represented with dummy edges $e^d_1$ \& $e^d_3$ and inserted in OPEN. This node is shown with a gray gradient to represent the underspecified full-D state for the optimization to figure out. In Fig. (d), we highlight the asynchronous execution of PINSAT. Here the dummy edges $e^d_1$ \& $e^d_3$ are expanded into real edges and the $e^2_0$ is lifted to full-D by running the optimizer. Similarly, in Fig. (e), more edges are concurrently lifted to full-D. Fig. (f) denotes how the short incremental trajectory generated in Fig. (e) is reused for warm-starting the optimization and generating the trajectories from $\state^S$. This is a powerful step that dramatically increases the speed of convergence. Note that once the trajectory from $\state^S$ is generated, it replaces the low-D edge and helps drive the graph search into more informative directions.}
\label{fig:pinsat}
\vspace{-0.4cm}
\end{figure*}

\subsection{PINSAT: Parallelized Interleaving of Search And Trajectory Optimization}


The pseudocode of PINSAT is given in Alg. \ref{alg:pinsat} along with a graphical illustration in Fig. \ref{fig:pinsat}. The caption in Fig. \ref{fig:pinsat} is intentionally detailed and not repeated in this section. In \wepase, to maintain bounded suboptimality, an edge can only be expanded if it is \emph{independent} of all edges ahead of it in \open and the edges currently being expanded, i.e. in set \be (Alg. \ref{alg:pinsat}, line \ref{alg:pinsat/be})~\cite{mukherjee2022epase}. However, since \insat and, therefore, \pinsat are not bounded suboptimal algorithms, the expensive independence check can be eliminated. Additionally, this increases the amount of parallelization that can be achieved. Alg. \ref{alg:pinsat} begins by initializing \textit{g}-values and inserting the start state with the dummy edge ($\startstate, \dac$) in \textit{OPEN} (line \ref{alg:pinsat/init1}-\ref{alg:pinsat/init2}). Once an edge is popped from \open (Alg. \ref{alg:pinsat}, line \ref{alg:pinsat/open_pop}), it is delegated for expansion to a spawned edge expansion thread. If all of the spawned edges are busy, a new thread is spawned as long as the total number of threads does not exceed the thread budget \numthreads (Alg. \ref{alg:pinsat}, line ~\ref{alg:pinsat/spawn}). When \numthreads is higher than the number of edges available for expansion at any point in time, only a subset of available threads are spawned. This prevents performance degradation due to the operating system overhead of managing unused threads.


The pseudocode for asynchronous edge expansion is given in Alg. \ref{alg:pinsat_expand}. If the edge selected for expansion is a dummy edge, then it is first replaced by unevaluated yet real edges in \textit{OPEN} using the low-D action set $\Aset$ (line \ref{alg:pinsat_expand:dummy1}-\ref{alg:pinsat_expand:dummy2}) and the priority value of its source state. Otherwise, the successor edge is generated and the algorithm enters the trajectory optimization routine (line \ref{alg:pinsat_expand/real1}-\ref{alg:pinsat_expand/real2}) if the edge is neither in \closed nor in \be (line \ref{alg:pinsat_expand/clorbe}). 

The trajectory optimization routine (provided in Alg. \ref{alg:pinsat_opt}) receives the set of ancestors of the edge being expanded (line \ref{alg:pinsatopt/anc}). We solve the optimization problem described in the previous sections (Eq. \ref{eq:tobj}) to find a corresponding full-dimensional trajectory from the start to the successor. This optimization is carried out in two steps. Only the trajectory generation between the ancestor node and the successor node, which is typically very short-horizon, must be optimized from scratch (line \ref{alg:pinsatopt/scratch1}, \ref{alg:pinsatopt/scratch2}). Finding the entire trajectory to the successor from the start state can be warm-started (line \ref{alg:pinsatopt/warmopt}, \ref{alg:pinsatopt/warm}) using the incoming trajectory to the ancestor. This is possible as a result of the dynamic programming in the low-D search which guarantees that an edge out of node which does not contain a valid incoming trajectory will never be expanded. In heuristic search, edge evaluation refers to computing the cost and representation of an edge that is part of the underlying graph. The optimization step of rewiring to a better parent node, common in many search algorithms, is restricted to the edges within the graph. In contrast, a key distinction in INSAT and therefore PINSAT is that it optimizes for trajectories between nodes not connected by an edge in the low-D graph when searching for a full-D trajectory to a node. INSAT generates this \textit{runtime} edge by searching over the set of ancestor nodes that lead to the node picked for expansion. Consequently, to make \weas and \epase compatible with PINSAT, we search the set of ancestor edges leading to the edge chosen for expansion. Trajectory optimization has access to the collision checker (implemented through callback mechanisms) and runs until convergence or collision occurrence, whichever comes first. During every iteration of the optimization, the iterates are reconstructed and checked for collision. In the case of collision, if the most recent iterate satisfies all the constraints, then it is returned as a solution. If the optimized trajectory is invalid, the algorithm moves to the next ancestor (line \ref{alg:pinsatopt/anc}).
\pdfoutput=1

\begin{algorithm}[H]
\caption{\pinsat: Trajectory Optimization}
\label{alg:pinsat_opt}
\begin{footnotesize}
\begin{algorithmic}[1]
\PROCEDURE{GenerateTrajectory}{$\state, \nextstate$}
    \FOR[From \startstate to \state]{$\state'' \in \textsc{Ancestors}(\state)\cup\state$} \label{alg:pinsatopt/anc}
        \STATE $\traj_{\nextnextstate\nextstate} = \optimize(\nextnextstate, \nextstate)$ \label{alg:pinsatopt/opt} \COMMENT{Eq. \ref{eq:uobj}} \label{alg:pinsatopt/scratch1}
        \IF{$\traj_{\nextnextstate\nextstate}$ is collision free}
            \STATE $\traj_{\startstate\nextstate} = \warmoptimize(\traj_{\startstate\nextnextstate}, \traj_{\nextnextstate\nextstate})$ \label{alg:pinsatopt/warmopt} \COMMENT{Eq. \ref{eq:uobj}}
            \RETURN $\traj_{\startstate\nextstate}$
        \ELSIF{$\nextnextstate = \state$}
            \IF[Solve Eq. \ref{eq:tobj}]{$\traj_{\nextnextstate\nextstate}^{t_{\text{max}}}$ exists}\label{alg:pinsatopt/exists}
                \STATE $\traj_{\nextnextstate\nextstate} = \optimize(\nextnextstate, \nextstate)$ \COMMENT{Eq. \ref{eq:uobj} with $\kmin$}\label{alg:pinsatopt/kminopt} \label{alg:pinsatopt/scratch2}
                \STATE $\traj_{\startstate\nextstate} = \warmoptimize(\traj_{\startstate\nextnextstate}, \traj_{\nextnextstate\nextstate})$ \COMMENT{\ref{eq:uobj}} \label{alg:pinsatopt/warm}
                \RETURN $\traj_{\startstate\nextstate}$
            \ENDIF
        \ENDIF
        \RETURN NULL
    \ENDFOR
\ENDPROCEDURE
\end{algorithmic}
\end{footnotesize}
\end{algorithm}

\subsection{Theoretical Analysis} In this section, we will show that PINSAT is a provably complete algorithm, \textit{i.e.} it finds a solution if one exists. We will begin with some preliminaries required to prove completeness. 

\begin{definition}
    \textbf{Tunnel around Low-D Edges and Paths:} For an edge $e\in\Edges$ in $\graph$, consider a low-D subspace corresponding to that edge called low-D tunnel of the edge $\tau_L(e) \subset \lowdspace$. The full-D tunnel for that edge is given by $\tau(e) = \tau_L(e) \times \auxspace$. This definition can be extended to paths on the graph. Let a path between two nodes $\state^\prime, \state^{\prime\prime} \in \Vertices$ be given as $\gpath_{\state^\prime\state^{\prime\prime}} = \{(\state^0, \state^1), (\state^1, \state^2), \ldots, (\state^{W-1}, \state^W) \mid \state^0=\state^\prime \land \state^W=\state^{\prime\prime} \land (\state^{w-1},\state^w)\in\Edges, 1\leq w\leq W\}$. Then a tunnel around $\gpath_{\state^\prime\state^{\prime\prime}}$ is given as $\tau(\gpath_{\state^\prime\state^{\prime\prime}}) = \cup_{e\in\gpath_{\state^\prime\state^{\prime\prime}}} \tau(e)$. 

\end{definition}

\begin{assump}
\label{ass:exists}
    We assume that there exists at least a path on the low-D graph $G$ from $\state^S$ to $\state^G$ such that its full-D tunnel contains $\traj_{\textbf{x}^S\textbf{x}^G}$, \textit{i.e.} $\exists \gpath_{\textbf{x}^S\textbf{x}^G} \in G \mid \exists \traj_{\textbf{x}^S\textbf{x}^G} \in \tau(\gpath_{\textbf{x}^S\textbf{x}^G})$. 
\end{assump}
\begin{assump}
    $\forall e \in \Edges$, $\tau(e)$ is convex.
\end{assump}


\begin{figure*}
\centering
 \includegraphics[width=1\textwidth]{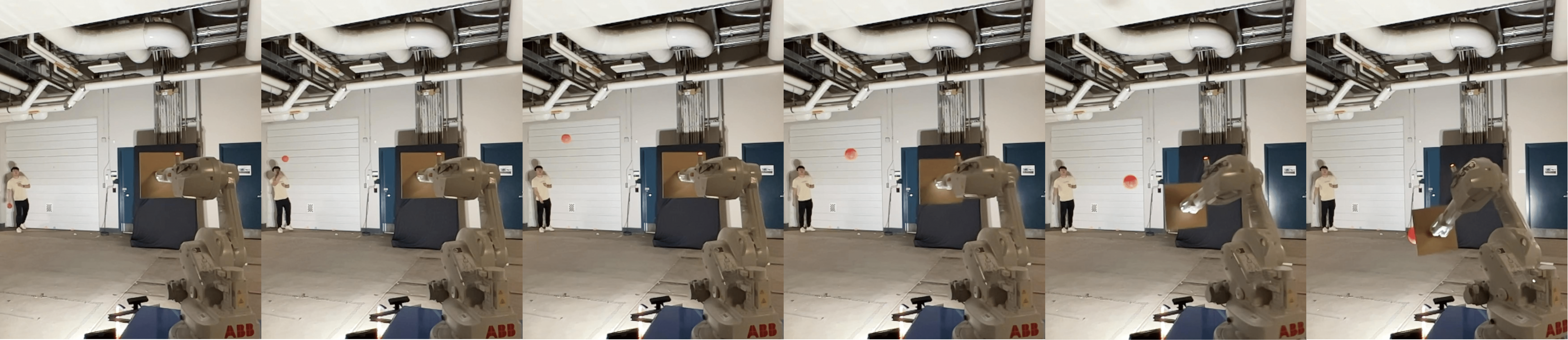}
\caption{PINSAT used to accelerate preprocessing of a motion library for rapid lookup. Here, an ABB arm with a shield attached to its end-effector executes a motion generated by PINSAT to block a ball launched toward it.}
\label{fig:shielding_abb}
\vspace{-0.3cm}
\end{figure*}

\begin{table*}
\footnotesize
\centering
\resizebox{\textwidth}{!}{%
\begin{tabular}{ccccc|cccc|c|cccc}
\toprule 
                                                    &   \multicolumn{4}{c|}{\rrtconnect+\trajopt}                          &  \multicolumn{4}{c|}{\wepase+\trajopt}                         &  \multicolumn{1}{c|}{\insat}     &    \multicolumn{4}{c}{\pinsat}                           \\\midrule
Threads                             & 5              &  10                 & 50                & 120             & 5              & 10               & 50               & 120             &  1              & 5      & 10               &  50                           & 120            \\\midrule
Success rate ($\%$)                 & 1            &  1                & 3               & 2             & 0             & 0              & 6               & 2              &  58             & 72       & 81               &  90                           & 90             \\
 Time ($\si{\second}$)      & 0.03$\pm$0.02  &  0.021$\pm$0.02     & 0.02$\pm$0.01     & 0.02$\pm$0.01   & 0.38$\pm$0.35  & 0.30$\pm$0.31    & 0.28$\pm$0.32    & 0.27$\pm$0.30   &  2.02$\pm$2.01  & 0.70$\pm$0.91       & 0.43$\pm$0.6     &  0.43$\pm$0.31                & 0.99$\pm$0.72   \\
Cost                                & 5.18$\pm$2.89  &  4.88$\pm$2.75      & 4.55$\pm$2.48     & 4.25$\pm$2.24   & 9.19$\pm$4.23  & 8.71$\pm$4.00    & 8.34$\pm$3.89    & 8.23$\pm$3.76   &  1.55$\pm$2.68  & 0.51$\pm$1.11       & 1.10$\pm$2.06    &  1.47$\pm$2.54                & 1.57$\pm$2.62  \\
\bottomrule
\end{tabular}
}
\caption{Mean and standard deviation of planning time and cost for \pinsat and the baselines for different thread budgets.}
\label{tab:pinsat_stats}
\end{table*}

\subsubsection{Minimum Order of B-spline} The minimum order $\kmin$ of the B-spline $\traj_{\kmin}(t)$ with control points $\textbf{P} = \{\textbf{p}_0, \ldots, \textbf{p}_m\}$ that is (i) guaranteed to lie entirely within the full-D tunnel of any edge $e=(\state^\prime, \state^{\prime\prime})\in\Edges$ and (ii) satisfy constraints on its derivatives (similar to Eq. \ref{eq:uobj4}) can be found by solving the simple optimization below
\begin{subequations}
\begin{alignat}{2}
\kmin =\ \ & \max_{e\in\Edges} \ \min_{k_e} \ \ k_e \label{eq:kobj1}\\ 
\textrm{s.t.} \quad & \textbf{p}_0 = \state^\prime, \textbf{p}_m = \textbf{x}^{\prime\prime} \label{eq:kobj2}\\
& \traj_{\kmin}(t) \in \tau_L(e); 0\leq t\leq t_{\text{min}} \label{eq:kobj2.5}\\
& |\textbf{p}_0^{(j)}|, |\textbf{p}_{k_e-j}^{(j)}| = t_{\text{min}}^j\traj^{(j)}_{\text{lim}} \label{eq:kobj3}\\
& |\textbf{p}_1^{(j)}|, \ldots, |\textbf{p}_{k_e-j-1}^{(j)}| \leq t_{\text{min}}^j\traj^{(j)}_{\text{lim}} \label{eq:kobj3}
\end{alignat}
\label{eq:kobj}
\end{subequations}    
The tunnel constraints in Eq $\ref{eq:kobj2.5}$ are transcribed into constraints on the control points similar to how corridor constraints are represented in \cite{bspcorridor}. In this work, the outer maximization is carried over actions set $\Aset$, which is typically a significantly smaller set than $\Edges$. This can be done if the edges in the graph can be grouped as a finite number of action primitives sharing the same tunnel representation. Note that the inner optimization in the above formulation is a linear program and is guaranteed to have at least one solution as long as the feasible region of the decision variable is nonempty and bounded. 
\begin{remark}
The feasible region depends on the limits of the system $\traj^{(j)}_{\text{lim}}$ and the choice of the tunnel $\tau$ which is domain-specific. However, in most cases, a straightforward design of the tunnel \cite{bspcorridor} can ensure a nonempty feasible region and a reasonably low $\kmin$.     
\end{remark}
Consider an edge $e = (\state^\prime, \state^{\prime\prime})\in\Edges$. 
\begin{lemma}
    If $\tau(e)$ is convex, then a $\traj_{\state^\prime\state^{\prime\prime}}^\sim(t) \in \tau(e), \forall t\in[0,t_{\text{max}}]$ that solves Eq. \ref{eq:tobj} to global optimality can be found. 
\end{lemma}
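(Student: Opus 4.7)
The plan is to reduce Eq.~\ref{eq:tobj} to a convex program by strengthening the obstacle-avoidance constraint from $\freespace$ to $\tau(e)$, and then invoke standard results from convex optimization to obtain a globally optimal solution. Since $\tau(e)\subset\freespace$, any solution of the strengthened problem is automatically a valid solution of the original one. The trajectory returned is simply $\traj^\sim_{\state^\prime\state^{\prime\prime}}(t) = \sum_{i=0}^m \textbf{p}_i^\ast N_{i,k}(t)$, where $\{\textbf{p}_i^\ast\}$ are the optimal control points. Because $\tau(e)$ is convex and the B-spline lies inside the convex hull of its active control points, enforcing $\textbf{p}_i\in\tau_L(e)$ for all $i$ guarantees that $\traj^\sim(t)\in\tau(e)$ for all $t\in[0,t_{\text{max}}]$.

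First I would verify that each component of the (strengthened) program is convex in the decision variables $\textbf{p}_0,\ldots,\textbf{p}_m$. The objective in Eq.~\ref{eq:tobj1} is a nonnegative weighted sum of squared $\ell_2$-norms of affine functions of the control points, hence convex. The boundary conditions in Eq.~\ref{eq:tobj3} are affine equalities. For the derivative bounds in Eq.~\ref{eq:tobj4}, note that the derivative control points $\textbf{p}_i^{(j)}$ are, by the recursion in Eq.~\ref{eq:bsdervi2}, affine combinations of the original $\textbf{p}_i$ with coefficients determined by the (fixed) knot vector; the resulting $\ell_\infty$-type bound is therefore a polytope constraint. Finally, the tunnel membership $\textbf{p}_i\in\tau_L(e)$ is convex by the hypothesis that $\tau(e)$ is convex.

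Next, I would establish feasibility. The constructive argument from the minimum-order analysis in Eq.~\ref{eq:kobj} provides, for any degree $k\ge\kmin$, a set of control points meeting the boundary conditions, respecting the derivative bounds, and lying in $\tau_L(e)$; this furnishes a nonempty feasible set. A convex problem with nonempty feasible region and a coercive (here, continuous and bounded-below) objective attains its infimum, which by convexity is a global minimum. Strict convexity of the quadratic objective on the affine subspace defined by the boundary conditions would additionally give uniqueness, though uniqueness is not required by the statement.

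The main obstacle I expect is bookkeeping around the derivative constraint: the control points of $\utraj^{(j)}$ are affine but not trivially so in the original $\textbf{p}_i$, and one must check that the affine map is well-defined for the specific knot vector in use so that Eq.~\ref{eq:tobj4} transcribes into a genuine polytope. A secondary subtlety is that Remark~\ref{rem:suff} reminds us that constraining the control points is only a sufficient condition for the curve to satisfy the derivative limits, so the strengthened program is a conservative inner approximation; this is harmless for the claim but should be flagged, since ``global optimality'' here means global optimality of the conservative (and convex) formulation whose feasible region is contained in that of Eq.~\ref{eq:tobj}.
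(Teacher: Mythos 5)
Your proposal is correct and follows essentially the same route as the paper's (much terser) proof: restrict the free-space constraint to the convex tunnel $\tau(e)$, observe that Eq.~\ref{eq:tobj} then becomes a convex program over the control points, and conclude global optimality; your added detail on the affine dependence of the derivative control points, the convex-hull containment, and feasibility via the $\kmin$ construction fills in steps the paper leaves implicit (it simply assumes $t_{\text{max}}$ is large enough for feasibility).
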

\begin{proof}
    Setting $\freespace=\tau(e)$ make Eq. \ref{eq:tobj5} a convex domain constraint on the decision variables and Eq. \ref{eq:tobj} a convex program. Assuming $t_{\text{max}}$ is large enough, we know that a convex optimization can be solved to global optimality. 
\end{proof}
Let $\boldsymbol{\hat{\phi}}_{\state^\prime\state^{\prime\prime}}\in\mathcal{C}^\gamma(\mathbb{R})$. $\boldsymbol{\hat{\phi}}_{\state^\prime\state^{\prime\prime}}$ need not satisfy $\traj_{\text{lim}}^{(j)}, j\leq\gamma$.\begin{lemma}
    If $\boldsymbol{\hat{\phi}}_{\state^\prime\state^{\prime\prime}} \in \tau_L(e)$, then there exists a $\kmin$-th order $\traj_{\state^\prime\state^{\prime\prime}}^* \in \tau(e)$ that satisfies Eq. \ref{eq:uobj}. 
    \label{lem:bb}
\end{lemma}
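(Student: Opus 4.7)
The plan is to exhibit a feasible B-spline for Eq.~\ref{eq:uobj} by reusing the construction that defines $\kmin$ in Eq.~\ref{eq:kobj}; because the objective in Eq.~\ref{eq:uobj1} is bounded below by zero on the feasible set, producing any feasible point will yield an optimizer $\traj_{\state^\prime\state^{\prime\prime}}^*$. The argument has three stages: (i) show that the inner LP embedded in Eq.~\ref{eq:kobj} is feasible for this particular edge $e$, (ii) extract the $\kmin$-order control points it returns, and (iii) verify that the resulting B-spline, paired with the duration $T=t_{\text{min}}$, satisfies every constraint of Eq.~\ref{eq:uobj}.

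For stage (i), I would use the hypothesis that $\boldsymbol{\hat{\phi}}_{\state^\prime\state^{\prime\prime}}\in\mathcal{C}^\gamma$ lies in $\tau_L(e)$. This guarantees the low-D tunnel is nonempty and path-connects $\state^\prime$ to $\state^{\prime\prime}$. Combined with Assumption~2 (convexity of $\tau(e)$) and the polyhedral transcription of Eq.~\ref{eq:kobj2.5} into linear constraints on the control points, the feasible region of the inner LP in Eq.~\ref{eq:kobj} is nonempty for some finite $k_e$, so by the remark following Eq.~\ref{eq:kobj} there exist control points $\textbf{P}=\{\textbf{p}_0,\ldots,\textbf{p}_m\}$ of order $\kmin$ satisfying Eqs.~\ref{eq:kobj2}--\ref{eq:kobj3}. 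Then for stages (ii)--(iii), I would build $\traj^*$ from $\textbf{P}$ with $T=t_{\text{min}}$ and check each constraint of Eq.~\ref{eq:uobj} in turn: Eq.~\ref{eq:uobj2} holds because $t_{\text{min}}\in[t_{\text{min}},t_{\text{max}}]$; Eq.~\ref{eq:uobj3} is immediate from the boundary equalities in Eq.~\ref{eq:kobj2}; Eq.~\ref{eq:uobj4} follows since $T^j\traj^{(j)}_{\text{lim}} = t_{\text{min}}^j\traj^{(j)}_{\text{lim}}$, which is exactly the bound Eq.~\ref{eq:kobj3} enforces on $\textbf{p}_i^{(j)}$; and Eq.~\ref{eq:uobj5} follows from Eq.~\ref{eq:kobj2.5} together with the convention that valid low-D tunnels are subsets of $\freespace$. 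Hence $\traj^*\in\tau(e)$ is feasible for Eq.~\ref{eq:uobj}.

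The hard part will be stage (i). The hypothesis only supplies a continuous curve $\boldsymbol{\hat{\phi}}$ in the tunnel that is \emph{not} required to satisfy any derivative bound, whereas the LP in Eq.~\ref{eq:kobj} must simultaneously keep the B-spline inside $\tau_L(e)$ and respect $\traj^{(j)}_{\text{lim}}$ at the tight duration $t_{\text{min}}$. Making the jump rigorous requires arguing that within a convex tunnel one can refine the number of control points (raising $k_e$) enough to shorten per-interval spacing and thereby reduce the magnitudes of $\textbf{p}_i^{(j)}$, while still keeping each control point in the tunnel. This is precisely the domain-specific tunnel-design argument alluded to in the remark following Eq.~\ref{eq:kobj}, and I would cite that remark as justification rather than re-establish it in full.
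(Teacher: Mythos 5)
Your proposal is correct and follows essentially the same route as the paper: the paper's own proof is a two-sentence appeal to ``constraint satisfaction for finding $\kmin$ in Eq.~\ref{eq:kobj}'' followed by using that $\kmin$ to choose the basis for Eq.~\ref{eq:uobj}, which is exactly your stages (i)--(iii) spelled out in detail with $T=t_{\text{min}}$. Your honest flagging of the feasibility gap in stage (i) --- that a derivative-unconstrained curve in $\tau_L(e)$ does not by itself guarantee the LP in Eq.~\ref{eq:kobj} is feasible, and that one must lean on the remark about tunnel design --- identifies precisely the step the paper also leaves implicit, so your write-up is, if anything, more candid than the original.
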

\begin{proof}
    It directly follows from the constraint satisfaction for finding $\kmin$ in Eq. \ref{eq:kobj}. This $\kmin$ can be used to choose the B-spline basis for optimizing Eq. \ref{eq:uobj} simultaneously over path length and time to find  $\traj_{\state^\prime\state^{\prime\prime}}^* \in \tau(e)$.
\end{proof}
In other words, Lemma. \ref{lem:bb} states that if there is a trajectory that is (i) $\mathcal{C}^j$-continuous, (ii) satisfies the low-D boundary conditions of an edge $e\in\Edges$ and, (iii) is contained entirely within the low-D tunnel of the edge $\tau_L(e)$, then we can use $k_{\text{min}}$-th order basis functions to construct a trajectory that satisfies full-D $j$-th derivative boundary conditions and lies entirely within the full-D tunnel of the edge $\tau(e)$. We note that Lemma. \ref{lem:bb} is a critical building block towards developing PINSAT's guarantee on completeness.

\begin{theorem}
    \textbf{Completeness of PINSAT:} If Assumption \ref{ass:exists} holds, then PINSAT with $\kmin$-th order B-spline optimization is guaranteed to find a $\traj_{\state^S\state^G}$ that satisfies Eq. \ref{eq:obj}.  
\end{theorem}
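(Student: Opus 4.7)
The plan is to reduce PINSAT's completeness to two pieces: (a) the completeness of the underlying \weas graph search on the finite low-D graph $\graph$, and (b) the per-edge solvability of the trajectory optimization along the witness path guaranteed by Assumption \ref{ass:exists}. The overall argument is a ``witness path'' style proof: I identify a concrete low-D path on which every \textsc{GenerateTrajectory} call is guaranteed to succeed, and then invoke termination/completeness of the parallel search to conclude that PINSAT must find this path or a cheaper one before \open empties.

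First I would fix, via Assumption \ref{ass:exists}, a path $\gpath_{\state^S\state^G} = \{(\state^0,\state^1),\ldots,(\state^{W-1},\state^W)\}$ with $\state^0=\state^S$ and $\state^W\in\goalreg$, together with a full-D minimizer $\traj_{\state^S\state^G}$ contained in $\tau(\gpath_{\state^S\state^G})$. Restricting this trajectory to the time interval of any edge $e_w=(\state^{w-1},\state^w)$ and projecting to $\lowdspace$ yields a $\mathcal{C}^\gamma$-continuous curve $\boldsymbol{\hat\phi}_{\state^{w-1}\state^w}$ lying in $\tau_L(e_w)$ and matching the low-D endpoints. This is exactly the antecedent of Lemma \ref{lem:bb}, so a $\kmin$-th order B-spline $\traj^*_{\state^{w-1}\state^w}\in\tau(e_w)$ satisfying Eq. \ref{eq:uobj} exists for every edge along $\gpath$.

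Next I would trace the behavior of \textsc{GenerateTrajectory} (Alg. \ref{alg:pinsat_opt}) on edge $e_w$. By Assumption 2, $\tau(e_w)$ is convex, so the fixed-duration problem Eq. \ref{eq:tobj} with $\freespace=\tau(e_w)$ is a convex program and is solvable to global optimality by the lemma just preceding the theorem; consequently the feasibility check on line \ref{alg:pinsatopt/exists} passes. The subsequent $\kmin$-th order call to \optimize on line \ref{alg:pinsatopt/kminopt} then returns the trajectory guaranteed by Lemma \ref{lem:bb}, and the $\warmoptimize$ step concatenates it with the trajectory already associated with $\state^{w-1}$. Iterating this edge-by-edge along $\gpath$ produces, for the particular ancestor chain $\state^{w-1}\mapsto\state^w$ explored in the loop at line \ref{alg:pinsatopt/anc}, a non-null $\traj_{\state^S\state^w}$ whose pieces each lie in the corresponding $\tau(e_w)\subset\freespace$ and each satisfy the derivative and boundary constraints of Eq. \ref{eq:obj}.

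Finally, because $\graph$ is finite and \wepase (stripped of the independence check, as noted in the Parallelization paragraph) expands each state at most once, PINSAT's main loop either returns a trajectory on line \ref{alg:pinsat/return_traj} or terminates with $\open=\emptyset$. Every edge of $\gpath$ being expandable with a finite cost update rules out the latter: by induction along $\gpath$, each $\state^w$ eventually receives a finite $\gval$-value and is inserted with its outgoing dummy edge into \open, so the goal edge $(\state^W,\dac)$ with $\state^W\in\goalreg$ is popped in finite time and triggers the return. The hard part of the proof, which I expect to require the most care, is the ancestor-loop argument at line \ref{alg:pinsatopt/anc} combined with the $\mathcal{C}^\gamma$-continuity requirement at the seams between successive edge segments: Lemma \ref{lem:bb} gives only per-edge smoothness, so I would verify that at least one choice of ancestor $\nextnextstate$ in the loop (specifically $\nextnextstate = \state^{w-1}$) produces a warm-started global spline that remains in $\cup_{j\leq w}\tau(e_j)$ and is $\mathcal{C}^\gamma$ across the seam, leveraging the B-spline knot structure of Eq. \ref{eq:knotvec} with $k$-fold end-knot multiplicity to enforce continuity of all derivatives up to order $\gamma$ at the junction.
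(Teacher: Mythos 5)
The paper states this theorem without providing a proof, so there is no official argument to compare against line by line; what it does supply are the two lemmas and Assumption \ref{ass:exists}, and your proposal assembles exactly these ingredients in the way the surrounding text signals (the paper explicitly calls Lemma \ref{lem:bb} ``a critical building block'' for completeness). Your decomposition --- a witness path from Assumption \ref{ass:exists}, per-edge solvability via the convexity lemma and Lemma \ref{lem:bb}, then termination of the finite-graph \weas search to rule out \open emptying before the goal edge is popped --- is the natural and, as far as one can tell, the intended argument.

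Two points deserve more care than you give them. First, Assumption \ref{ass:exists} only places $\traj_{\textbf{x}^S\textbf{x}^G}$ in $\tau(\gpath_{\textbf{x}^S\textbf{x}^G}) = \cup_{e}\tau(e)$, a union; your step of restricting the minimizer to the portion corresponding to edge $e_w$ and concluding that this segment lies in $\tau_L(e_w)$ with matching low-D endpoints assumes an edge-by-edge decomposition that the assumption does not literally provide --- the segment between $\state^{w-1}$ and $\state^w$ could stray into a neighboring edge's tunnel while remaining in the union. This needs either a strengthened assumption or a short additional argument. Second, the seam-continuity issue you flag at the end is genuine and is not resolved by Lemma \ref{lem:bb} or by anything in the paper: Eq. \ref{eq:kobj} fixes the endpoint derivative \emph{magnitudes} of each per-edge spline at the limit values, and nothing guarantees that the two splines meeting at $\state^w$ agree on the direction of those derivatives, so $\mathcal{C}^\gamma$ continuity across the junction must come from the \warmoptimize re-solve over the concatenated control points rather than from the clamped knot vector alone. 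You correctly identified this as the hard part; it is also the part the paper leaves entirely unaddressed, so completing it would strengthen the result beyond what the authors establish.
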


\FloatBarrier
\section{Evaluation}

We evaluate PINSAT in a kinodynamic manipulation planning for an ABB IRB 1600 industrial robot shown in Fig \ref{fig:pinsat_abb}. The red horizontal and vertical bars are obstacles and divide the region around the robot into eight quadrants, four below the horizontal bars and four above. We randomly sample 500 hard start and goal configurations. We ensure that the end effector corresponding to one of those configurations is above the horizontal bars and the other is underneath. We also ensure that the end effector for the start and goal configurations are not sampled in the region enclosed by the same two adjacent vertical bars. Our samples are deliberately far apart in the manipulator's C-space and require long spatial and temporal plans to connect them. The motivation for sampling start-goal pairs in this manner is to subject the planner to stress tests under severe constraints on the duration of the trajectory. All the experiments are run on an AMD Ryzen Threadripper Pro with 128 threads. 

For our experiments, we use a backward breadth-first search (BFS) from the goal state in the task space of the manipulator as our heuristic. The action space $\Aset$ is made of small unit joint movements. For every joint, we had a $4\deg$ and $7\deg$ primitive in each direction subject to the joint limits. In addition to this, during every state expansion, we also check if the goal state is the line of sight to consider the goal as a potential successor. The velocity limits of the arm were inflated 10x the values specified in the datasheet for simulation. We used $50m/s^2$ and $200m/s^3$ as the acceleration and jerk limits. The maximum duration of trajectory was limited to 0.6s. The reason for inflating the velocity limits is to allow the bare minimum time to navigate the obstacles and reach from start to goal. 

Table \ref{tab:pinsat_stats} shows success rate and planning time statistics for INSAT and PINSAT for different thread budgets. The planning time statistics were only
computed over the problems successfully solved by both algorithms for a given thread budget. We also implemented a parallelized version of RRTConnect (pBiRRT) as a baseline. The kinematic plan generated by pBiRRT was then post-processed by the same B-spline optimization used in PINSAT to compute the final kinodynamic plan. As the optimization has a constraint on the maximum execution duration of the trajectory, adding all the waypoints from the path at once will result in an over-constrained system of equations for which a solution may not exist. To circumvent this, we iteratively add the waypoint constraint starting from the minimum containing just the start and the goal state. Though pBiRRT had a 100\% success rate in computing the spatial trajectory, the results after post-processing with the optimizer were abysmal. PINSAT achieves a significantly higher success rate than INSAT for all thread budgets greater than 1. Specifically, it achieves a 5x improvement in mean planning time, a 7x improvement in median planning time, and a 1.8x improvement in success rate for $N_t$ = 50. Even with a single thread, PINSAT achieves lower planning times than INSAT. This is because of the decoupling of edge evaluations from state expansions in ePA*SE.


\FloatBarrier
\section{Conclusion}


We presented PINSAT, an extension of INSAT that asynchronously parallelizes the numerous \textit{expensive} calls to the trajectory optimizer. PINSAT achieves this by incorporating the concepts of edge expansion and asynchronous edge evaluation borrowed from the recently introduced parallel search algorithm ePA*SE \cite{mukherjee2022epase}. We evaluated our algorithm using a kinodynamic manipulation planning domain and demonstrated significantly higher success rates than INSAT \cite{insat, insatptc}, along with a drastic reduction in planning time. PINSAT guarantees only completeness at present, but in the future, we aim to explore how to establish independence checks, as done in ePA*SE, to guarantee optimality."

\FloatBarrier
\bibliographystyle{ieeetr}
\bibliography{root}







\end{document}